\begin{document}
\title{Strategic and Crowd-Aware Itinerary Recommendation}
\author{
Junhua Liu\inst{1,2} \and 
Kristin L. Wood\inst{1,3} \and 
Kwan Hui Lim\inst{1}
}


\authorrunning{J. Liu et al.}

%

\institute{
 Singapore University of Technology and Design \and
 Forth AI \and
 University of Colorado Denver \\
 \email{j@forth.ai, \{kristinwood, kwanhui\_lim\}@sutd.edu.sg}
}

\maketitle              
\begin{abstract}
There is a rapidly growing demand for itinerary planning in tourism but this task remains complex and difficult, especially when considering the need to optimize for queuing time and crowd levels for multiple users. This difficulty is further complicated by the large amount of parameters involved, i.e., attraction popularity, queuing time, walking time, operating hours, etc. Many recent works propose solutions based on the single-person perspective, but otherwise do not address real-world problems resulting from natural crowd behavior, such as the Selfish Routing problem, which describes the consequence of ineffective network and sub-optimal social outcome by leaving agents to decide freely. In this work, we propose the Strategic and Crowd-Aware Itinerary Recommendation (SCAIR) algorithm which optimizes social welfare in real-world situations. We formulate the strategy of route recommendation as Markov chains which enables our simulations to be carried out in poly-time. We then evaluate our proposed algorithm against various competitive and realistic baselines using a theme park dataset. Our simulation results highlight the existence of the Selfish Routing problem and show that SCAIR outperforms the baselines in handling this issue. 

\keywords{Tour Recommendations \and Trip Planning \and Recommendation Systems \and Sequence Modelling}
\end{abstract}
\begin{figure}[h]
\centering
\includegraphics[width=\textwidth]{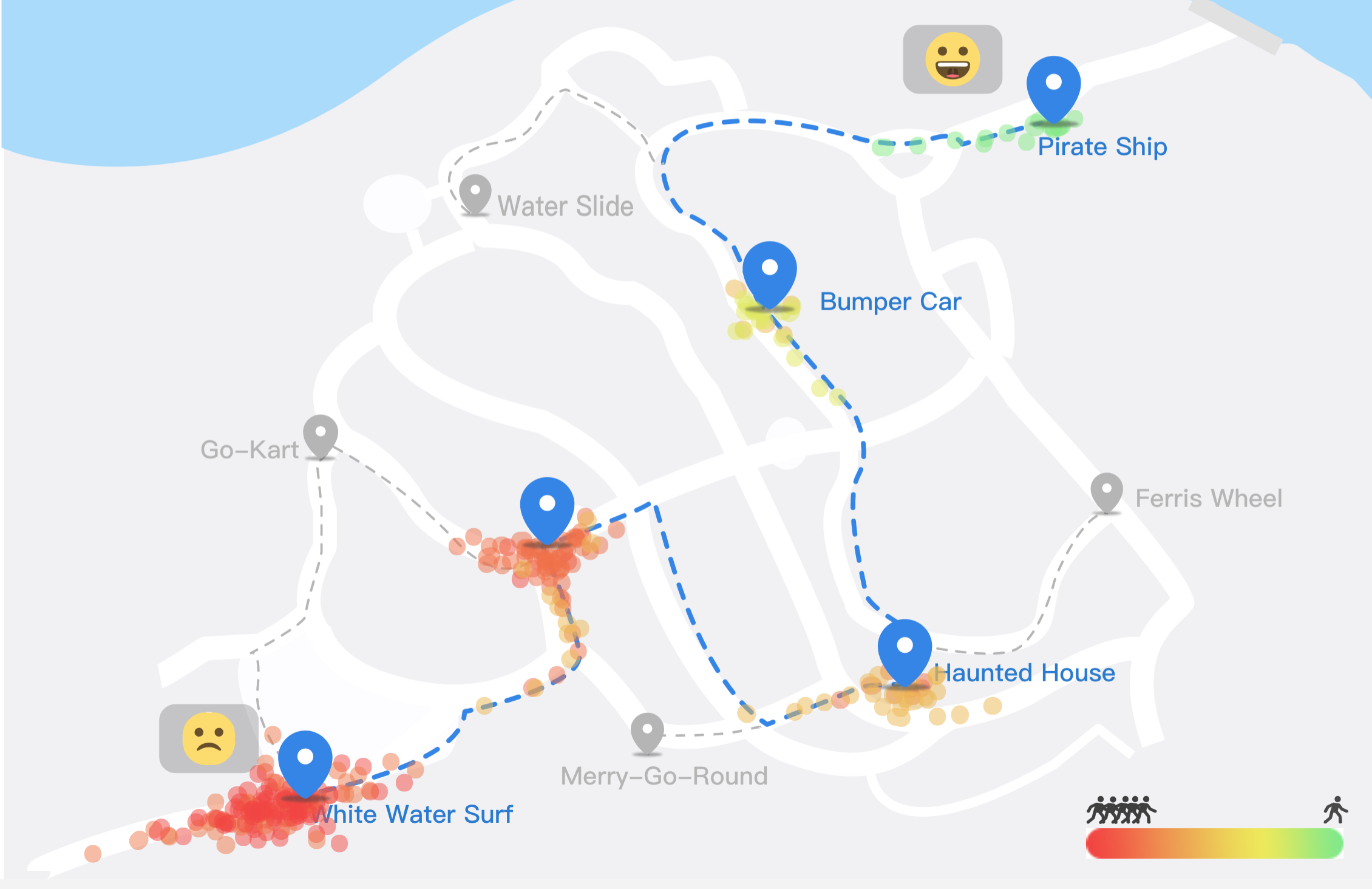}
\caption{Existing itinerary recommendation problems leverage data-driven approaches with a single-person perspective. In real life, this will result in the Selfish Routing problem, where leaving all agents free to act according to their own interests results in a sub-optimal social welfare. As illustrated, the recommended path performed sub-optimally, where the closer the POIs are to the start of the route, the more crowded they would be, while leaving all other POIs (in grey) not utilized.}
\label{fig:sr}
\end{figure}

\section{Introduction}
Itinerary recommendation has seen a rapid growth in recent years due to its importance in various domains and applications, such as in planning tour itineraries for tourism purposes. Itinerary recommendation and planning is especially complex and challenging where it involves multiple points of interest (POIs), which have varying levels of popularity and crowdedness. For instance, while visiting a theme park, the visitor's route can include POIs such as roller coasters, water rides, and other attractions or events. The itinerary recommendation problem can be modelled as an utility optimization problem that maximizes the number of facilities visited and the popularity of these facilities\footnote{The terms "POIs", "attractions" and "facilities" are used interchangeably.}, while minimizing the queuing time and travel time from one facility to the other. Facilities in a theme park come with different properties such as popularity, duration, location and a dynamic queuing time. Visitors are often constrained by a time budget that limits the number of facilities one could visit in a single trip. While many algorithms have been developed~\cite{zhang-tois16,lim-kais17,choudhury-ht10,gionis-wsdm14,Padia-BigData19}, they mostly aim to recommend itineraries for individual travellers, whereas a real-life itinerary is also affected by the actions of other travellers, such as lengthening the queuing time at a facility.


Many works focus on constructing a single optimal path for the individual traveller, solely based on historical data. While this approach works for the individual traveller, it leads to an sub-optimal itinerary when all travellers are given the same recommendation. Consider a recommender system that recommends an itinerary comprising the most popular POIs with the least queuing time based on such historical data. In a real-life scenario with multiple travellers,  all travellers will follow the same recommended itinerary with the shortest historical queuing time, resulting in an expected queuing time that would grow with each new arrival, as illustrated in Figure~\ref{fig:sr}. In other words, the later an agent\footnote{We use the terms "travellers", "visitors" and "agents" interchangeably.} arrives to the system, the longer her expected queuing time will be. As a result, the social welfare or the collective utility of all agents has failed to be optimized. As an individual traveller, it is extremely difficult for an agent to gain knowledge of the system state, i.e., the people who are visiting the park and their respective paths. As a result, letting the agent find an optimal strategy that maximizes her expected utility is unrealistic without considering the actions of other agents. 

To address this problem, we propose the Strategic and Crowd-Aware Itinerary Recommendation (SCAIR) algorithm, which is a recommender system that maintains an internal information of all recommended routes and leverages on this internal information to make routing recommendations to its arriving agents. In other words, we take a game-theoretic approach to address the problem and formulate a crowd-aware itinerary recommendation algorithm having in mind the Selfish Routing problem~\cite{roughgarden2005selfish}, i.e., allowing agents act freely results in a sub-optimal social welfare. Concretely, we model the itinerary recommendation problem into a strategic game~\cite{osborne1994course}, where the system, i.e., a theme park, defines a set of allocation rules to allocate route to each player in the system, instead of leaving the agents a high degree of freedom to choose their own path. Experiments show that our approach is effective in optimizing utility of all agents. 

\section{Main Contributions}
Our main contributions are as follows:
\begin{itemize}
    \item We introduce and formulate the crowd-aware itinerary recommendation problem as a social welfare optimization problem that considers the actions of multiple travellers, in contrast to existing works that only consider the perspective of the single traveller (Section~\ref{problemDef}).
    \item To address this crowd-aware itinerary recommendation problem, we propose the SCAIR algorithm which utilizes a game-theoretic approach to recommend itineraries for multiple travellers (Section~\ref{proposedAlgo}).
    \item Using a theme park dataset, we compare our SCAIR algorithm against various competitive and realistic baselines and show how SCAIR outperforms these baselines with a large reduction in queuing times (Sections~\ref{experimentSetup} and~\ref{results}).
\end{itemize}

For the rest of the paper, Section~\ref{litReview} discusses related works and how our research differs from these earlier works. Section~\ref{conclusion} summarizes this paper and introduces some future research directions.
Next, we introduce the problem formulation of this crowd-aware itinerary recommendation problem.


\section{Crowd-aware Itinerary Recommendation Problem}
\label{problemDef}

\subsection{General Approach}
In this work, we view the itinerary recommendation problem from a global perspective and formulate it as a strategic game where the system designs and distributes the optimal path to every agent on arrival, based on the existing agents in the system and their respective paths. In the context of a theme park, one can think of this entity as the theme park operator that gives out the recommendation of various itineraries to visit the attractions to different visitors. We propose the SCAIR algorithm that dynamically recommends routes taking into consideration all existing agents in the system. 

The crowd-aware itinerary recommendation problem aims to maximize the sum of all agents' utility in the system. This turns out to be a social welfare optimization problem that is NP-hard~\cite{nguyen2013survey}. Furthermore, simulating or solving the problem is also empirically challenging. One has to take into consideration the entire history of existing visitors results in exponential space-complexity with respect to the number of agents, and exponential time-complexity with respect to the number of facilities in a path. 

To overcome these challenges, we propose a simplified version which models the recommendation problem as a finite markov chains and is known to be in NC~\cite{papadimitriou1987complexity} and decidable in poly-logarithmic time~\cite{arora2009computational}. The simplified model makes an assumption that each decision embeds information of the immediate last decision and the model as a result is able to provide a snapshot of the entire history. Next we will discuss the formulation of the problem. 

\subsection{Problem Formulation}
We formulate the crowd-aware itinerary recommendation problem to be a finite markov chain and impose constraints such as (1) fixing the starting point, (2) setting a time budget for the path, and (3) limiting the distance between two stations. These constraints reflect real-life considerations closely, such as fixed starting point near the entrance; visitors having limited time to tour; and dissatisfaction arising with long walking distance among facilities.

Concretely, we model the theme park comprising numerous tourist attractions as a fully connected graph $G(F, C)$, where $F = \{f_1, ..., f_n\}$ is the collection of $n$ facilities in the system, and $C = [c_{ij}]$ is the set of connections from $f_i$ to $f_j$. Each connection $c_x$ is associated with the properties of distance $Dist(c_{ij})$ and travel time $Trav(c_{ij})$ in minutes. Each facility $f_x$ is associated with a set of properties including coordinates $(lat_x, long_x)$, duration of visit $Dur(f_x)$ in minutes, capacity $Cap(f_x)$ and popularity $Pop(f_x)$. 

We formulate the agents' visits as $m$ states $S=\{s_1, ..., s_m\}$, where each state $s_x$ is associated with a feasible path $p_x = [f^{(x)}_1, ..., f^{(x)}_{n_x}]$ with $n$ facilities $[f^{(x)}_1, ..., f^{(x_n)}]$. The total time $TT_x$ of path $p_x$ is defined as:
\begin{equation}
\label{eq:ttx}
TT_x = \sum_{i=1}^{n_x}{Dur(f^{(x)}_{i})} + \sum_{i=1}^{n_x-1}{Trav(c_{i,i+1})}
\end{equation}

We model the utility of the agents with respect to the popularity of each facility visit normalized by the expected waiting time at each facility. Our assumption is that higher popularity of a facility indicates a greater attractiveness to visitors, subjected to how long they have to wait for that facility. Concretely, we define the utility function $U_x$ for path $x$ with $n$ nodes as follows:

\begin{equation}
\label{eq:util}
U_x = \frac{\sum_{f \in p_j}{Pop(f)}}{Q(p_{x}|p_{x-1})}
\end{equation}
where $Q(p_{x}|p_{x-1})$ is the expected queuing time at path $p_x$ given $p_{x-1}$, and $Pop(p_x)$ is the sum of popularity of all facilities in the path. The path's expected queuing time $Q(p_{x}|p_{x-1})$ is calculated by summing up the queuing time at all facilities:
\begin{equation}
Q(f_i)= \frac{1}{Cap(f_y)}Dur(f_y)  \delta{(f^{(x)}_{y,h} = f^{(x-1)}_{y,h})}    
\end{equation}
where $\delta{(f^{(x)}_{y,h} = f^{(x-1)}_{y,h})}=1$ if the facility appears to overlap between paths $p_x$ and $p_{x-1}$ within the same hour $h$. Capacity $Cap(f_x)$ is set to be a constant for simplicity. Finally, the transition matrix $T$ is defined as:
\begin{equation}
    T_{ij} = \frac{\sum_{f \in p_j}{Pop(f)}} {Q(p_{j}|p_{j-1=i} )}
\end{equation}

The transition matrix is then normalized by:
\begin{equation}
    T_{ij} := \frac{T_{ij}}{\sum_j{T_{ij}}} 
\end{equation}

The set of feasible paths, i.e., total search space, is determined by solving an optimization problem, as follows:

\begin{equation}
\label{eq:pf}
\begin{aligned}
& \text{maximize} 
& & TT_{x} = \sum_{i=1}^{n_x}{Dur(f_i)} + \sum_{j=1}^{n_x-1}{Trav(c_{j, j+1})} \\
& \text{subject to} 
& & Dist(c_{j, j+1}) \leq{s}, \;\; TT_{x}\leq{t} \\
\end{aligned}
\end{equation}
for $n$ facilities in the path, with a constant time budget $t$.\\

Finally, we model the strategic itinerary recommendation problem as a social welfare optimization problem as follows:

\begin{equation}
\label{eq:scair}
\begin{aligned}
& \text{maximize} 
& & W = \sum_{x}U_xp_x \\
& \text{subject to} 
& & \sum_{x}TT_x \leq{t}, \;\; x\in{\{1,...,n\}}
\end{aligned}
\end{equation}
for $n$ agents and time budget $t$.

\subsection{Proof of NP-Hardness}
We further investigate the NP-hardness of various sub-problems and show the respective proofs in this section.

\begin{theorem}
\label{thm:pf}
The path finding problem defined in Equation~\ref{eq:pf} is NP-hard.
\end{theorem}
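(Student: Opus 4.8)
The plan is to establish NP-hardness by exhibiting a polynomial-time many-one reduction from a well-known NP-complete problem to the decision version of the path-finding problem in Equation~\ref{eq:pf}. The decision version I would use asks: given the graph $G(F,C)$, a distance bound $s$, a time budget $t$, and a target value $k$, does there exist a feasible simple path $p_x$ with $Dist(c_{j,j+1}) \le s$ for every consecutive pair, $TT_x \le t$, and $TT_x \ge k$? Because the objective $TT_x$ is an additive quantity over the visited facilities and the traversed connections, and because the distance bound $s$ effectively restricts the usable connections to a subgraph of $G$, the problem is exactly that of finding a longest feasible simple path in a restricted graph. This makes the \textsc{Hamiltonian Path} problem (a special case of \textsc{Longest Path}) the natural source for the reduction.

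First I would take an arbitrary instance of \textsc{Hamiltonian Path}, namely a graph $H=(V,E)$ with $|V|=n$, and build a theme-park instance as follows. I set $F = V$, so there is one facility per vertex, and keep $G$ fully connected as required by the model. To encode the edge set $E$, I assign $Dist(c_{ij}) = 1$ whenever $\{f_i,f_j\} \in E$ and $Dist(c_{ij}) = 2$ otherwise, and I set the distance bound to $s = 1$; this forces any feasible path to traverse only connections corresponding to true edges of $H$. I then set $Trav(c_{ij}) = 1$ for all connections and $Dur(f_i) = 0$ for all facilities, so that for a simple path the quantity $TT_x$ equals the number of connections traversed, i.e. one less than the number of visited facilities. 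Finally I set $t = n-1$ and the target $k = n-1$.

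Next I would argue the correctness of the reduction in both directions. Under this construction, a feasible simple path visiting $\ell$ facilities has $TT_x = \ell - 1$ and uses only edges of $H$; hence a feasible path attaining $TT_x \ge n-1$ must visit all $n$ vertices along edges of $H$, which is precisely a Hamiltonian path in $H$. Conversely, any Hamiltonian path in $H$ maps to a feasible path of total time exactly $n-1 \le t$ that respects the distance bound, so it yields a yes-instance of the path-finding decision problem. Since the construction is clearly polynomial in the size of $H$, and \textsc{Hamiltonian Path} is NP-complete, the path-finding problem is NP-hard.

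The step I expect to be the main obstacle is justifying that the relevant paths are simple, i.e. that no facility is revisited; this is what makes $TT_x$ equal the edge count and what turns the budget $t=n-1$ into a Hamiltonicity test rather than a mere existence-of-walk test. I would address this by appealing to the itinerary-recommendation semantics, in which a traveller visits each attraction at most once, so the model's feasible paths are simple by definition. A secondary subtlety is the self-referential form of Equation~\ref{eq:pf}, where the same quantity $TT_x$ appears in both the objective and the constraint; I would resolve this by working with the stated decision version above, which separates the budget $t$ from the target $k$ and makes the optimization-to-decision equivalence explicit.
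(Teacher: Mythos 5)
Your reduction is correct, but it takes a genuinely different route from the paper. The paper proves Theorem~\ref{thm:pf} by reduction from the 0--1 Knapsack problem: it discards the sequential structure entirely (setting the distance cap $s$ to infinity and the travel times to a constant), treats a path as a \emph{set} of facilities each with a ``profit'' and a ``weight'' derived from $Dur(f_i)$, and identifies the time budget $t$ with the knapsack capacity $c$. You instead reduce from \textsc{Hamiltonian Path}, exploiting precisely the structure the paper throws away: the pairwise distance constraint $Dist(c_{j,j+1})\le s$ is used to encode the edge set of an arbitrary graph inside the fully connected $G(F,C)$, and with unit travel times and zero durations the objective $TT_x$ becomes an edge count, so the budget/target $t=k=n-1$ turns feasibility into a Hamiltonicity test. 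The two arguments therefore isolate different sources of hardness: the paper's shows the problem is hard even on a metric-free, order-free instance because of the budgeted selection of durations, while yours shows it is hard even when all durations vanish, because of the combinatorics of which facility may follow which. Your version is arguably the tighter fit to Equation~\ref{eq:pf} as written --- it respects the path semantics, and your explicit decision version (separating the budget $t$ from the target $k$) cleanly resolves the self-referential appearance of $TT_x$ in both the objective and the constraint, something the paper's write-up glosses over. Your one stated dependency, that feasible paths are simple, is justified by the paper itself, which forbids revisiting a facility within a trip (Section~\ref{findFeasiblePaths}), so the gap you flagged is closed by the model's own semantics.
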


\begin{proof}
We prove the NP-hardness of the path finding problem by reduction from the 0-1 Knapsack problem which is known to be NP-hard~\cite{martello1999dynamic}. Recall that the 0-1 Knapsack problem is a decision problem as follows:

\begin{equation}
\label{eq:knapsack}
\begin{aligned}
& \text{maximize} 
& & z = \sum_{i}p_ix_i \\
& \text{subject to} 
& & \sum_{i}w_ix_i \leq{c} \\
&
& & x_i\in{\{0,1\}}, \;\; i\in{\{1,...,n\}}
\end{aligned}
\end{equation}

for $n$ available items where $x_i$ represents the decision of packing item $i$, $p_i$ is the profit of packing item $i$, $w_i$ is the weight of item $i$, $c$ is the capacity of the knapsack. 

Intuitively, the path finding problem is a decision problem of allocating a set of facilities into a path with a capacity of time budget, where each facility comes with properties of profit and duration time.

Formally, we transform the minimization problem in Equation~\ref{eq:pf} to an equivalent maximization problem. Concretely, the binary variable $f_i\in{\{0,1\}}$ is included, where $f_i = 1$ if $f_i$ is in path $p_x$, and 0 if otherwise. Furthermore, we define the profit of facility $f_i$ as $p_i = -Dur(f_i)$ and set the travel time $Trav(c_{ij})$ to be a constant. Finally, the distance constant cap $s$ is set to be infinity. The new problem formulation is represented as follows: 

\begin{equation}
\label{eq:newpf}
\begin{aligned}
& \text{maximize} 
& & T_{path}' = \sum_{i}p_if_i \\
& \text{subject to} 
& & \sum_{i}{Dur(f_i)f_i}\leq{t} \\
&
& & f_i\in{\{0,1\}}, \;\; i \in{\{1,...,n\}}
\end{aligned}
\end{equation}

In this formulation, a path is equivalent to the knapsack in the 0-1 Knapsack problem, where each facility has its profit of $p_i)$, and its cost of $Dur(f_i)$ that is equivalent to the profit and weight of an item respectively. The maximization problem is subjected to a constant time budget $t$ which is equivalent to the capacity $c$ in a 0-1 Knapsack problem. 

As a result, for any instance of the 0-1 Knapsack problem (i.e. item allocation decisions), we are able to find an equivalent instance of the path finding problem (i.e. a facility allocation decisions). Therefore, a solution in the path finding problem yields an equivalent solution to the 0-1 Knapsack decision problem. As such, we have completed the proof of NP-hardness for our path finding problem to be NP-hard. 

\end{proof}

\begin{theorem}
\label{thm:welfareop}
The social welfare optimization problem defined in Equation~\ref{eq:scair} is NP-hard.
\end{theorem}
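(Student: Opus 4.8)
\emph{Proof proposal.} The plan is to establish NP-hardness by restriction, reducing from the path finding problem of Theorem~\ref{thm:pf} (equivalently, directly from the 0-1 Knapsack problem of Equation~\ref{eq:knapsack}). The key observation is that the social welfare problem of Equation~\ref{eq:scair} contains the path finding problem of Equation~\ref{eq:pf} as the special case of a single agent: when $n = 1$ there is no preceding agent, the objective $W = U_1 p_1$ collapses to selecting one feasible path of maximal utility, and the budget constraint $\sum_x TT_x \le t$ reduces to $TT_1 \le t$. Thus any algorithm that solves the welfare problem also solves the single-agent path finding problem, which is already known to be NP-hard.

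First I would fix the reduced instance. Take a single agent together with an arbitrary Knapsack instance $(p_i, w_i, c)$. As in the proof of Theorem~\ref{thm:pf}, I identify each facility with an item: I set $Pop(f_i) = p_i$, let the time contribution of $f_i$ play the role of the weight $w_i$, and set the time budget $t = c$. Because there is no earlier agent in the system, the overlap indicator $\delta$ is identically zero, so the queuing denominator $Q(p_1 \mid p_0)$ degenerates to a fixed positive constant (the baseline service time). Maximizing $W = U_1 = \bigl(\sum_{f \in p_1} Pop(f)\bigr)/Q$ is therefore equivalent to maximizing $\sum_{f \in p_1} Pop(f)$ subject to the budget, i.e.\ exactly a 0-1 Knapsack maximization of the form in Equation~\ref{eq:newpf}.

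Next I would verify that the reduction is polynomial and answer-preserving. The construction copies the Knapsack data into facility attributes in linear time, and a path $p_1$ is feasible with welfare $W$ if and only if the corresponding item set $\{\, i : f_i \in p_1 \,\}$ is feasible with profit $W \cdot Q$ in the Knapsack instance. Hence an optimal welfare allocation yields an optimal Knapsack packing and conversely, which completes the reduction and the hardness argument.

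The main obstacle is the coupling term $Q(p_x \mid p_{x-1})$. In the general multi-agent setting this denominator makes $W$ non-separable across agents, so one cannot naively decompose the welfare into independent per-agent Knapsack subproblems. I sidestep this difficulty by restricting to the single-agent sub-family, where the coupling vanishes and the denominator is constant; since NP-hardness is only a lower bound, exhibiting one polynomial-time reduction from a hard sub-family suffices. The sole technical care needed is to confirm that $Q$ is a strictly positive constant under the reduction, so that the ratio $U_1$ is well defined and the reduction preserves the ordering of solutions; this holds by fixing the baseline service time to any positive value.
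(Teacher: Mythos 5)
Your reduction is sound but follows a genuinely different route from the paper's. The paper reduces from 0-1 Knapsack at the level of whole paths: it identifies each feasible path with a Knapsack item, mapping the path's utility $U_x$ to the item's profit, its total time $TT_x$ to the item's weight, and the aggregate constraint $\sum_x TT_x \le t$ to the knapsack capacity --- a direct structural analogy on the multi-agent problem (though it never spells out how to realize arbitrary profit/weight pairs as feasible paths of a concrete park instance). You instead argue by restriction: you specialize to $n=1$, observe that the inter-agent coupling through $Q(p_x \mid p_{x-1})$ then degenerates, and reuse the facility-level Knapsack encoding from Theorem~\ref{thm:pf}. Your approach buys a more concrete instance construction, since it inherits the explicit item-to-facility mapping already established there, at the cost of two small repairs you rightly flag: (i) the single-agent objective is the utility $U_1$, not the total-time objective of Equation~\ref{eq:pf}, so the welfare problem does not literally ``contain'' Equation~\ref{eq:pf} as a special case --- but your reduction goes directly from Knapsack to the $n=1$ welfare instance, so nothing is lost; and (ii) under the paper's literal definition the overlap indicator $\delta$ vanishes when there is no predecessor, making $Q(p_1 \mid p_0)=0$ and $U_1$ undefined, so you must either fix the denominator to a positive constant, as you propose, or equivalently plant a dummy predecessor whose path overlaps every facility in the same hour, which rescales every candidate's utility by the same positive factor and preserves the ordering of solutions. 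With either repair the argument goes through and establishes the claimed NP-hardness.
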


\begin{proof}
Once again, we prove the NP-hardness of our welfare optimization problem by reduction from the 0-1 Knapsack problem.

In Equation~\ref{eq:scair}, the set of paths assigned to agents in the system is equivalent to the set of items in 0-1 Knapsack problem; each path has its utility and total time, which are equivalent to the profit and weight of an item respectively; the maximization problem is subjected to a constant time budget $t$ which is equivalent to the capacity $c$ in a 0-1 Knapsack problem. 

As a result, for any instance of the 0-1 Knapsack problem decisions, we are able to find an equivalent instance of a path assignment decision that yields a solution to the original Knapsack decision problem. As such, we conclude the proof of NP-hardness and have shown that our welfare recommendation problem is NP-hard. 

\end{proof}

Next, we describe our proposed SCAIR algorithm for solving crowd-aware itinerary recommendation problem.

\section{Strategic and Crowd-Aware Itinerary Recommendation (SCAIR) Algorithm}
\label{proposedAlgo}

In this section, we describe our proposed SCAIR algorithm, which comprises the main steps of finding feasible paths, generating a transition matrix and simulating traveller visits. 

\subsection{Finding Feasible Paths}
\label{findFeasiblePaths}
Algorithm~\ref{algo:feasible-path} shows the pseudocode of our path finding algorithm based on a breadth-first strategy. The input is a graph $G(F,C)$ that represent a theme park with the set of facilities $F$ and connections $C$, time budget $TT_{max}$, and distance limit between two facilities $Dist_{max}$. This algorithm then generates and returns a collection of feasible paths, $Paths$, with respect to the provided input graph $G(F,C)$. 

\begin{algorithm}[t]
 \KwData{$f_i \in{F},  c_{ij}\in{C},  TT_{max},  Dist_{max},  f_0$}
 \KwResult{$Paths$: the set of feasible paths}
    \SetAlgoLined
    \Begin{
        $Paths = [[f_0]]$\;
        \While{True}{
            \For {$path_i \in Paths$}{
                $VF = FindViableFacilities(f^{(i)}_{-1}, Dist_{max})$\;
                \If{$len(VF)==0$}{
                    $path_x = path_i + [FindNextNearest(f^{(i)}_{(-1)})]$\;
                    \If{$TT_x < TT_{max}$ and $path_x \not \in Paths$}{
                        $Paths += [path_x] $\;
                        $Paths.pop(path_i)$
                    }
                }
                  {
                    \ForEach{$vf \in VF$}{
                        $path_{x} = path_i + [vf]$\;
                        \If{$TT_x < TT_{max}$ and $path_x \not\in Paths$}{
                            $Paths += [path_x]$\;
                        }
                    }
                    $Paths.pop(path_i)$\;
                }
            }
            \If{$AllPathsMaxTimeBudget(Paths)$ or $AllPathsReachFullLength(Paths)$}{break\;}
        }        
    }
 \caption{SCAIR - FindFeasiblePaths()}
 \label{algo:feasible-path}
\end{algorithm}

We iterate the collection of intermediate $Paths$, and call the $FindViableFacilities$ function to find viable facilities, where $f^{(i)}_{-1}$ is the last facility of the path, and $Dist_{max}$ is the maximum distance an agent wants to travel from one facility to another. We set the parameters of total time budget $T_{max} < 8 hours$ and maximum allowed distance between two facilities $Dis_{max}(f_current, f_next) < 200m$. If there are no available facility that meets the distance constraint and the path has available time budget remaining, the agent proceeds to the next nearest facility. We also do not allow an agent to revisit a facility in the same trip.

\begin{algorithm}
 \KwData{$Parks, TimeBudgets, ArrivalIntervals$}
 \KwResult{Export simulation data to a CSV file}
    \SetAlgoLined
    \Begin{
        $Results = \{\}$\;
        \For{$Park \in Parks$}{
            \For{$SimTime \in TimeBudgets$}{
                \For {$\lambda \in ArrivalIntervals$}{
                    $Paths = FindFeasiblePaths(Park, SimTime)$\;
                    $T = ConstructTM(Park, Paths)$\;
                    $Qt, Pop, Utility = RunSimulation(Paths, \lambda, SimTime)$\;
                    $Update(Results, [Qt, Pop, Utility])$\;
                }
            }
        }
        $ExportCsvFromDict(Results)$\;
    }
 \caption{SCAIR - Simulate()}
 \label{algo:simulate}
\end{algorithm}

\textbf{Line 2}. The algorithm starts with constructing a 2-dimensional array, where each row represent a path as a sequence of facilities visited. We then conduct a breadth-first search (line 3 to 25), starting with the first row with an element of the initial facility, i.e., the entrance of a theme park. 

\textbf{Line 6 to 11}. If the algorithm is unable to find a facility within the feasible range, it will instead find the nearest facility that is not yet visited, and assign the new path into the $Paths$ collection if two conditions are met, namely (1) the new path's total time is within the visitor's time budget $TT_{max}$, and (2) no identical path exists in the $Paths$ collection. Eventually we remove the path the iteration started off.

\textbf{Line 13 to 20}. If the algorithm manages to find a set of viable facilities, it will then iterate through the set and execute a similar selection process.

\textbf{Line 22 to 24}. The algorithm breaks out from the infinite loop when any one of two conditions is met, namely (1) all paths in the $Paths$ collection have maximized its time budget i.e. any additional facility will make the total time of a path to be larger than the visitor's time budget; or (2) every path has included all available facilities.

\subsection{Transition Matrix} 
\label{transMatrix}
Using the set of feasible paths found (Section~\ref{findFeasiblePaths}), we now construct a Transition Matrix $T$ by calculating $T_{ij}$ as the costs of taking path $j$ given path $j-1=i$. The output of $FindCost()$ function varies based on the arrival interval $\lambda$ because it affects the expected time of arrival for each facilities at $path_j$, which leads to different occurrence of overlapping facilities between $path_i$ and $path_j$. 

\subsection{Simulation} Algorithm~\ref{algo:simulate} shows an overview of the simulation procedure, which involves iterating through the visit data of theme parks $Parks$, a list of time budgets $TimeBudgets$, and an array of arrival intervals $ArrivalIntervals$. 

\textbf{Line 6 to line 13}. For each step, the $FindFeasiblePaths()$ function finds the set of feasible paths which enables the $ConstructTM()$ function to construct the transition matrix, with input parameters namely park data $Park$ and simulation time $SimTime$. The $RunSimulation()$ function then runs the simulation to find the total queuing time $Qt$, average sum of popularity among all facilities visited $Pop$, and the expected utility $Utility$ which is calculated as a function of $Qt$ and $Pop$. Finally, we update the $Results$ dictionary (Line 9) and export the experimental data into CSV files (line 13) after completing the simulations. 

\section{Experimental Setup}
\label{experimentSetup}

In this section, we describe our dataset, evaluation process and baselines.

\subsection{Dataset}

We conduct our experiments using a publicly available theme park dataset from~\cite{lim2017personalized}. This dataset is based on more than 655k geo-tagged photos from Flickr and is the first that includes the queuing time distribution of attractions in various Disney theme parks in the United States.
In our work, we perform our experiments and evaluation using the data of visits in Epcot Theme Park and Disney Hollywood Studio.

\subsection{Experimental Parameters}
As previously described in Section~\ref{transMatrix}, we denote the arrival interval of agents as $\lambda$ which indicates the time between the arrival of two agents, measured in minute. In this work, $\lambda$ is set to be a constant for simplicity.  
For a robust evaluation, we perform our evaluation using multiple values of the evaluation parameters, namely arrival interval $\lambda\in{\{0.01, ... 0.09, 0.1, ..., 1.0\}}$, and simulation time $T$ between 60 and 360 minutes in 30 minutes intervals (i.e. $T\in{\{60, 90, ..., 360\}}$). 

\subsection{Evaluation and Baselines}
We compare our proposed SCAIR algorithm against three competitive and realistic baselines. The first two algorithms are based on intuitive strategies commonly used by visitors in real-life~\cite{lim-kais17}, while the third is a greedy algorithm used in~\cite{zhang2015personalized}. In summary, the three baseline algorithms are:
\begin{enumerate}
    \item Distance Optimization (denoted as $DisOp$)~\cite{lim-kais17}. An iterative algorithm where agents always choose the facility with the shortest distance to the currently chosen one.
    \item Popularity Optimization (denoted as $PopOp$)~\cite{lim-kais17}. An iterative algorithm where agents always choose the next most popular facility that satisfies the specified distance constraint from the currently chosen one.
    \item Popularity over Distance Optimization (denoted as $PodOp$)~\cite{zhang2015personalized}. An iterative greedy approach that models utility as the popularity of the POI normalized by the distance from the current one, and iteratively chooses the POI with the highest utility.
\end{enumerate}

Similar to many itinerary recommendation works~\cite{lim-kais19,lim2017personalized}, we adopt the following evaluation metrics:
\begin{enumerate}
    \item Average Popularity of Itinerary (denoted as $Avg Pop$). Defined as the average popularity of all attractions recommended in the itineraries.
    \item Expected Queuing Time per Visitor (denoted as $Avg Qt$). Defined as the average queuing time that each visitor spends waiting for attractions in the recommended itinerary.
    \item Expected Utility (denoted as $Uty$). Defined as the average utility score for all users based on the recommended itineraries.
\end{enumerate}


\begin{figure}[pth]
\centering
\includegraphics[width=\textwidth]{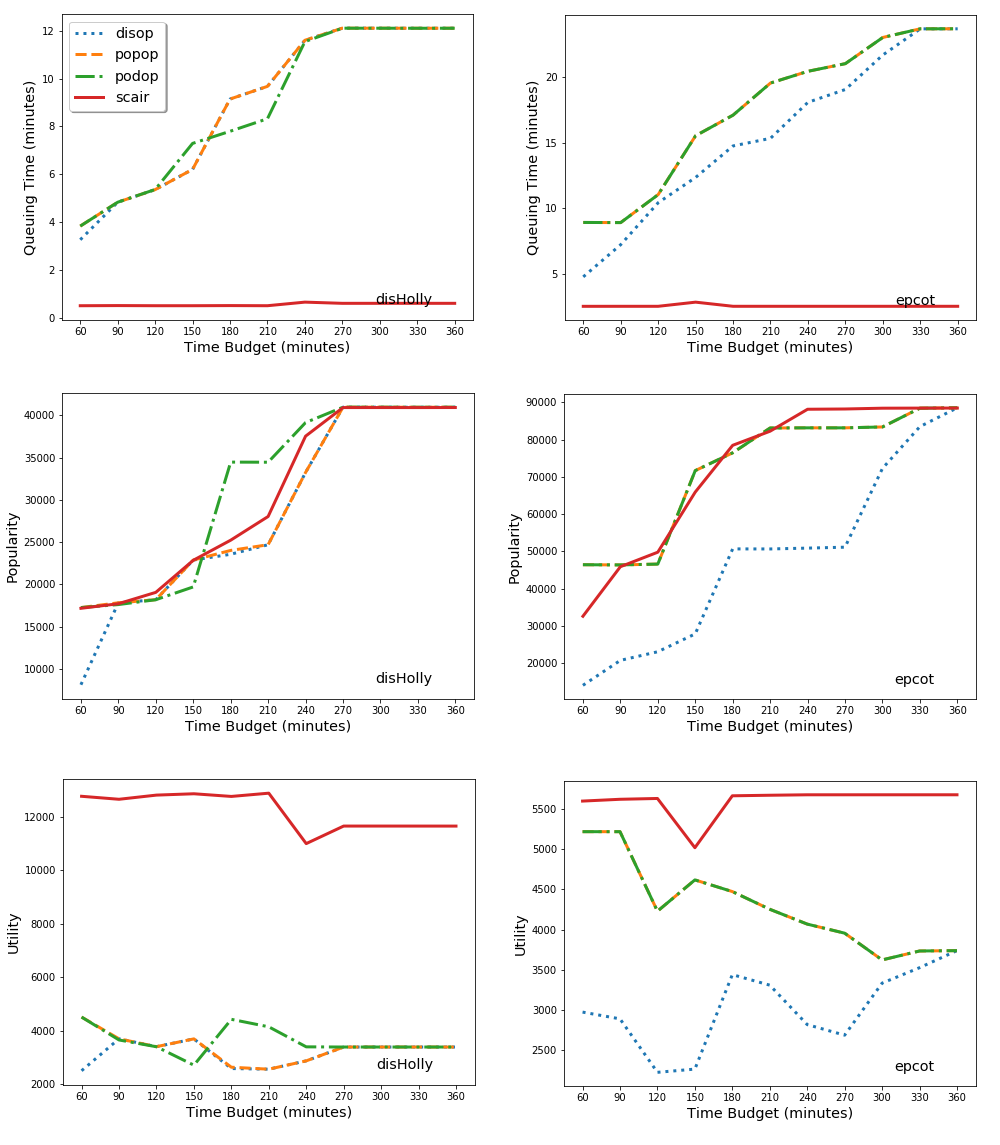}
\caption{The plots show how the queuing times, popularity and utility change with respect to simulation time $T$, over two theme parks data (Disney Hollywood and Epcot Theme Park). We observe that: (1) SCAIR's queuing time is consistently and significantly lower than the baselines. (2) Popularity of of all 4 algorithms perform similarly for DisHolly, while DisOp performs significantly poorer than the others for Epcot. (3) SCAIR's utility consistently outperforms the baselines.}
\label{fig:qt}
\end{figure}

\section{Results and Discussion}
\label{results}

Figure~\ref{fig:qt} shows the experimental results of the SCAIR algorithm compared to the three baseline algorithms. The x-axis indicates the time budget of visits and the y-axis indicates the queuing time, popularity and utility. Multiple experiments are conducted based on different arrival intervals $\lambda$, i.e., from 0.01 to 0.1 with a step size of 0.01, and from 0.1 to 1.0 with a step size of 0.1. The values in the graph are averaged across all $\lambda$.

\bgroup
\def\arraystretch{1.5}
\begin{table}[h] \centering
\caption{Queuing Time Ratio (Smaller values are better)}
\label{tab:qtratio}
\begin{tabular}{lcc}
\hline
 & Disney Hollywood & Epcot Theme Park \\
 & (DisHolly) & (Epcot) \\ \hline
DisOp & $0.045\pm0.221$ & $0.076\pm0.414$ \\
PopOp & $0.046\pm0.215$ & $0.092\pm0.368$ \\
PodOp & $0.045\pm0.211$ & $0.092\pm0.368$ \\
\color{blue}\textit{SCAIR} & \color{blue}\textit{$0.003\pm0.010$} & \color{blue}\textit{$0.016\pm0.006$} \\ \hline
\end{tabular}
\end{table}
\egroup

\textbf{Queuing Time.} In relative terms, we observe that SCAIR outperforms the baselines for both the queuing time and utility in both theme parks. SCAIR is able to maintain a low queuing time with different time budgets, while the baseline's queuing time increases with the growth of time budget. The observation is consistent for both theme parks. Table~\ref{tab:qtratio} shows the ratio of queuing time and time budget of visitors. SCAIR produces a queuing time ratio that is 78.9\% to 93.4\% shorter than that of the baselines, across both DisHolly and Epcot theme parks.

\textbf{Popularity.} All four algorithms perform similarly for the DisHolly dataset, while PopOp, PodOp and SCAIR remain similar but outperform DisOp for the Epcot dataset. We observe that PodOp achieves a relatively high Popularity when time budget is equal to 180 min and 210 min. We observe that this phenomena is due to the special geographic distribution of the POIs in DisHolly, where the optimal path according to the algorithm includes two POIs that are remote from other POIs but yield very high popularity. 

\textbf{Utility.} In terms of Utility, SCAIR outperforms all baselines consistently across all time budgets for both theme parks. The main contributing factor for this result is due to the much improved queuing time performance that SCAIR achieves, compared to the baselines. 

\section{Related Work}
\label{litReview}

Prior works propose different approaches for implementation to solve the itinerary recommendation problem. In the Information Retrieval community, many works use matrix factorization or collaborative filtering approaches to find a ranked list of top locations, which is known as top-k POIs recommendation~\cite{ye-sigir11,yuan-sigir13,li-sigir15,yao-sigir15,leung-sigir11}. In the Operations Research community, researchers have proposed heuristic approximation~\cite{zhang-tois16}, a modified Ant Colony System~\cite{wang-cikm16}, integer programming~\cite{lim-kais17} and similar methods to solve this itinerary recommendation problem.

Many works have modelled the itinerary recommendation problem as a variant of the Orienteering problem~\cite{choudhury-ht10,gionis-wsdm14,lim-kais19}. In the Orienteering problem, the recommendation aims to optimize social welfare with a global reward such as popularity, with respect to budget constraints such as travel time or distance among attractions in an itinerary. This approach typically does not take into consideration the trade-off between the duration in a facility and its popularity, which may contribute substantially to the global profit.

\subsection{Discussion}
These earlier works face a major limitation where the recommendation algorithms are constructed based on a single person's perspective. Despite some recent works exploring the effects of group or crowd behavior~\cite{wang-cikm16,garcia-esa11,lim-icaps16}, the algorithms treat the system as a static environment where properties such as queuing time only depend on historical data. Simulating an optimal path in such a static environment has a natural disadvantage where self-interested agents prioritize personal objective functions which may result in ineffective social welfare.  For instance, when everyone visiting the theme park follow the same recommended path, the queuing time will increase dramatically, and the optimality of such recommendation algorithms will then collapse. Roughgarden's work ~\cite{roughgarden2005selfish} discusses this problem extensively, defined as Selfish Routing problem, where giving agents freedom to act according to their own interests results in a sub-optimal social welfare.

The Selfish Routing problem was studied in the area of Game Theory and Mechanism Design~\cite{roughgarden2005selfish,koutsoupias1999worst,papadimitriou2001algorithms}. The inefficiency of achieving the optimize natural objective is quantitatively measured by Price of Anarchy, which was first defined as the ratio between the worst-case Nash equilibrium and the optimum sum of payoffs in game-theoretic environments~\cite{koutsoupias1999worst,papadimitriou2001algorithms}. Braess's Paradox for traffic flow~\cite{braess1968uber} describe the phenomenon where adding a new link to a transportation network might not improve the operation of the system, in the sense of reducing the total vehicle-minutes of travel in the system~\cite{eric1997braess}. To break out from this phenomenon, a system operator can manually interfere with or change agents' actions to provide policies or economic incentives with well designed strategies. Our proposed game-theoretic, dynamic itinerary recommendation algorithm in this paper is an instance of such strategy.

To address these limitations, we propose the Strategic and Crowd-Aware Itinerary Recommendation (SCAIR) algorithm to address the ineffectiveness of welfare optimization due to the lack of centralized control~\cite{Piliouras:2016:RSP:3007189.2930956}. The proposed recommendation algorithm takes into consideration all visits in a an itinerary planning scenario (e.g., a theme park), and makes recommendations to the next visitor with the knowledge of other visitors' paths in the park. Furthermore, the queuing time at all facilities at a certain hour is dynamically modelled according to the expected number of visitors in the same place at the same hour. 

\section{Conclusion and Future Work}
\label{conclusion}

\subsection{Conclusion and Discussion}
Prior works on itinerary recommendation typically aim to make recommendations for the individual traveller and perform poorly in scenarios where multiple travellers use the same recommended itinerary, i.e., the Selfish Routing problem. In this paper, we introduced the crowd-aware itinerary recommendation problem and highlighted this Selfish Routing problem where all self-interested agents aim to maximize their own utility which result in sub-optimal social welfare. For example, when all travellers are recommended the same POIs with a short queuing time based on historical data, those POIs then become congested and suffer from a long queuing time.

To address this problem, we proposed the SCAIR algorithm that takes into consideration crowd behavior and addresses the NP-hard Social Welfare Optimization problem with an finite markov chains, which is in NC and can be solved in poly-logarithmic time. We performed a series of experiments using a theme park dataset. Experimental results show that SCAIR outperforms various competitive baselines in terms of a reduced queuing time and improved utility, while offering similar levels of popularity scores.




\subsection{Future Work} 
We will further investigate models that further simulate real-life situations. For instance, we can also locate the entrances and exits of the theme parks to initialize and end paths; we could also use soft-max instead of one-hot to simulate the choices of paths which simulates the probabilistic decisions visitors make in real-life. We will also attempt to improve the formulation of the multi-objective optimization problem, such as by assessing the Pareto efficiency of the two objectives. It is also worthwhile looking into modifying our strategic recommendation algorithm and apply to game-theoretic environments, such as knowledge acquisition~\cite{liu2020self}, crisis management~\cite{liu2020crisisbert} and career path planning~\cite{dave2018combined,liu2019ipod}. Finally, we intend to look further into prior works, such as~\cite{lees2008does,palumbo2017predicting} to explore Machine Learning approaches in solving time-variant path planning problems and attempt to enhance the solution and simulation performance.  

\section{Acknowledgements}
This research is funded in part by the Singapore University of Technology and Design under grant SRG-ISTD-2018-140.

%
%
%
\bibliographystyle{splncs04}
\bibliography{ref}

\end{document}